\documentclass{article}

\usepackage{amsmath,amsfonts,bm}

\def\eqref#1{equation~\ref{#1}}

\def\1{\bm{1}}

\DeclareMathAlphabet{\mathsfit}{\encodingdefault}{\sfdefault}{m}{sl}
\SetMathAlphabet{\mathsfit}{bold}{\encodingdefault}{\sfdefault}{bx}{n}

\usepackage{hyperref}
\usepackage{url}
\usepackage{booktabs}
\usepackage{multirow}
\usepackage{array}
\usepackage{adjustbox}
\usepackage{enumitem}
\usepackage{listings}
\usepackage{wrapfig}
\usepackage{changes}
\usepackage[dvipsnames]{xcolor}
\usepackage{cuted}
\definecolor{codebg}{RGB}{245,245,244} 
\definecolor{coderule}{RGB}{200,200,200} 
\definecolor{MyDarkRed}{rgb}{0.8,0.02,0.02}
\definecolor{MyDarkBlue}{rgb}{0.02,0.02,0.8}
\definecolor{MyDarkGreen}{rgb}{0.1,0.8,0.1}
\definecolor{darkgreen}{rgb}{0.0, 0.5, 0.0}

\definecolor{lakeblue}{RGB}{103,154,183}

\definecolor{lightsoftorange}{RGB}{230,190,160}

\usepackage{amsmath,amssymb,amsfonts,bbm}
\usepackage[ruled,vlined]{algorithm2e}
\lstdefinestyle{promptstyle}{
  backgroundcolor=\color{codebg},
  basicstyle=\ttfamily\footnotesize, 
  frame=single,
  rulecolor=\color{coderule},
  frameround=tttt,   
  columns=fullflexible,
  keepspaces=true,
  showstringspaces=false,
  xleftmargin=1em,
  xrightmargin=1em,
  aboveskip=0.8em,
  belowskip=0.8em,
}

\usepackage{booktabs}
\usepackage{pifont}

\newcommand{\cmark}{\textcolor{green!60!black}{\ding{51}}} 
\newcommand{\xmark}{\textcolor{red!70!black}{\ding{55}}}   

\usepackage{booktabs,pifont}
\usepackage{microtype}
\usepackage{graphicx}
\usepackage{subcaption}
\usepackage{booktabs} 

\usepackage{hyperref}

\usepackage{stfloats}

\usepackage[accepted]{icml2026}

\usepackage{amsmath}
\usepackage{amssymb}
\usepackage{mathtools}
\usepackage{amsthm}

\usepackage[capitalize,noabbrev]{cleveref}

\theoremstyle{plain}
\newtheorem{theorem}{Theorem}[section]
\newtheorem{proposition}[theorem]{Proposition}

\theoremstyle{definition}

\theoremstyle{remark}

\icmltitlerunning{Self-evolving LLM agents with in-distribution Optimization}

\begin{document}

\twocolumn[
  \icmltitle{Self-evolving LLM agents with in-distribution Optimization}



  \icmlsetsymbol{corr}{*}

  \begin{icmlauthorlist}
    \icmlauthor{Yudi Zhang}{tue}
    \icmlauthor{Meng Fang}{liverpool,tue}
    \icmlauthor{Zhenfang Chen}{mit}
    \icmlauthor{Mykola Pechenizkiy}{tue}
  \end{icmlauthorlist}

  \icmlaffiliation{tue}{Eindhoven University of Technology, Netherlands}
  \icmlaffiliation{liverpool}{University of Liverpool, United Kingdom}
  \icmlaffiliation{mit}{MIT-IBM Watson AI Lab}

  \icmlcorrespondingauthor{Meng Fang}{Meng.Fang@liverpool.ac.uk}

  \icmlkeywords{Machine Learning, ICML}

  \vskip 0.3in
]



\printAffiliationsAndNotice{}  

\begin{abstract}
Large Language Models (LLMs) have recently emerged as powerful controllers for interactive agents in complex environments, yet training them to perform reliable long-horizon decision making remains a fundamental challenge. A key difficulty lies in credit assignment: agents often receive delayed rewards only at the end of episodes.
In this paper, we propose Q-Evolve, a self-evolving framework for LLM agents that unifies automatic process-reward labeling and policy learning within a principled in-distribution reinforcement learning paradigm. In each evolving iteration, our method learns an in-distribution critic from a hybrid off-policy dataset that combines expert demonstrations with agent-generated trajectories, stabilizing Bellman backups in sparse-reward settings via a weighted Implicit Q-Learning objective. The learned value function is then used to derive step-wise process rewards through advantage estimation, enabling dense and reliable supervision without environment backtracking or human annotation. Leveraging these signals, we perform behavior-proximal policy optimization that evolves the agent over the data used for process reward labeling, allowing iterative self-improvement without exacerbating distribution shift.
We evaluate our method on AlfWorld, WebShop, and ScienceWorld, showing Q-Evolve outperforms strong baselines in sample efficiency, robustness, and overall task performance. Our results demonstrate that stable agent self-evolution is achievable through the co-evolution of process-level supervision and policy, both grounded within a shared in-distribution learning loop. \href{https://qevolve.github.io/}{https://qevolve.github.io/}.
\end{abstract}

\begin{figure*}[h]
    \centering
    \includegraphics[width=0.95\linewidth]{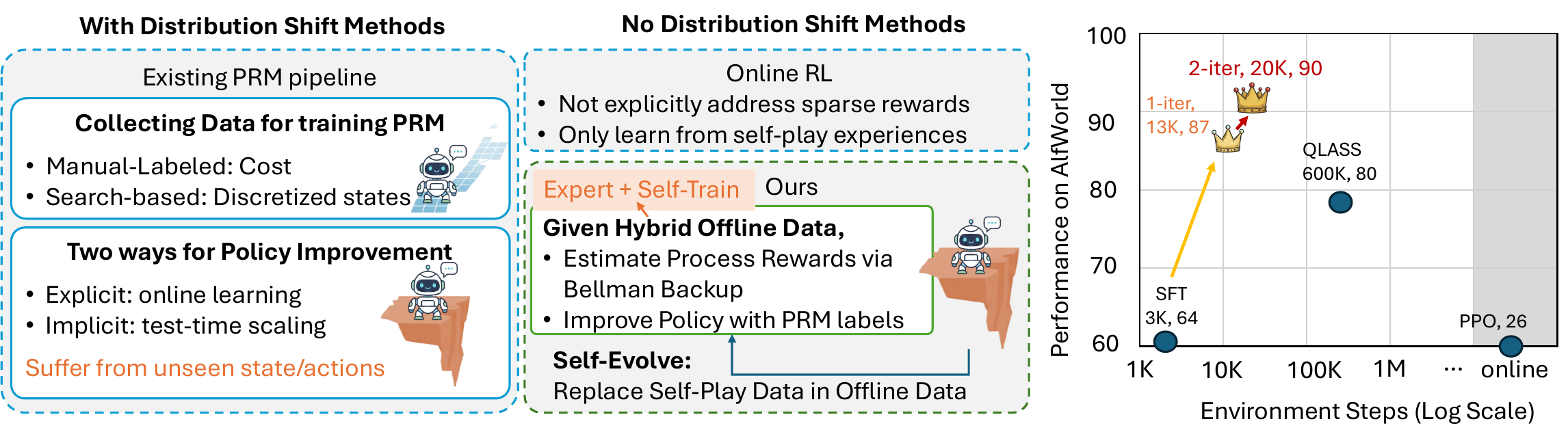}
    \caption{Comparison of existing methods. \textbf{Left:} Existing PRM methods rely on costly manual labels or search-based rollouts requiring discrete states, often failing due to distribution shifts between PRM training and policy improvement. \textbf{Upper Mid:} Most online RL does not address episodic sparse rewards.  \textbf{Bottom Mid:}
Our framework utilizes a hybrid off-policy dataset (expert + agents' interaction data) to derive rewards via Bellman backups. 
By co-evolving process reward supervision and policy improvement within a shared in-distribution loop, the agent achieves stable self-evolution.
\textbf{Right:} A visualization of performance \emph{vs} environment steps required for collecting data.}
    \label{fig:motivation}
\end{figure*}

\section{Introduction}
While Large Language Models (LLMs)~\citep{achiam2023gpt,zhao2023survey}, have demonstrated exceptional reasoning capabilities~\cite{wei2022chain, yao2023tree}, their role is increasingly shifting from static text generation to driving interactive agents~\citep{wang2024survey}. These agents must move beyond simple prediction to master sequential decision-making in dynamic environments. By leveraging the cognitive power of LLMs, researchers have explored their potential in various interactive domains, including navigation~\citep{song2024trial, lin2025qlass, gigpo}, gaming~\citep{wangvoyager, liuagentbench}, and robotics~\citep{kimopenvla, wang2025large}. However, achieving consistent and closed-loop autonomy remains a significant challenge, as LLMs must effectively bridge the gap between high-level reasoning and reliable execution.

A central challenge in training LLM agents for interactive, long-horizon tasks is that feedback is often sparse and severely delayed~\cite{lin2025qlass,gigpo}. Agents typically receive meaningful supervision only at episode termination, making it difficult to attribute success or failure to individual intermediate decisions~\citep{rudder,randomized_return_decomposition,grd}.
To address this, recent efforts aim to automatically derive step-wise process rewards to avoid expensive, hard-to-scale manual annotation~\citep{lightman2023let, ma2023let}. For example, \citet{wang2024math, wang2024multi, lin2025qlass} rely on extensive online interaction to search and estimate Q-values as process reward labels, while GiGPO~\citep{gigpo} introduces anchor-state grouping to calculate step-level advantages.

A more fundamental limitation shared by existing approaches lies in the reliability of the feedback they consume. In particular, process-level supervision is inherently \emph{distribution-sensitive}: process rewards are only reliable on the similar state-action distribution on which they are derived. However, as illustrated in Figure~\ref{fig:motivation}, during online policy optimization or test-time scaling, the evolving policy inevitably generates unseen actions even when the starting state is sampled from the PRM's training set. 
This issue is further exacerbated in dynamic environments. As the agent interacts with the environment, the underlying environmental dynamics propel the agent into out-of-distribution states, might leading to a catastrophic distribution shift that invalidates the PRM's feedback.
Additionally, most existing frameworks rely on restrictive assumptions, such as environment determinism, the availability of state-backtracking, or discretizable state features for grouping and searching. 
These assumptions, coupled with the requirement for exhaustive online interactions, significantly hinder the deployment of such methods in realistic, high-stakes, or non-deterministic scenarios.
Therefore, there is a growing need for methods that both generate and leverage step-wise supervision within the same distribution to keep the process reward labeling reliable.

A natural way is to use classical Bellman backups in an offline manner, which theoretically addresses long-horizon credit assignment. However, transferring this paradigm to LLM agents remains difficult. First, in the general episodic reward settings, the bootstrapping mechanism is prone to significant stochastic noise that accumulates without intermediate signals, preventing stable convergence. This is further compounded by the combinatorial action space of LLMs, where scalar Q-values defined over multi-token sequences fail to facilitate direct policy optimization.
Consequently, rather than optimizing the policy directly, existing offline RL works for LLMs often resort to using external critics learned from offline data to re-evaluate or calibrate candidate actions~\citep{snell2023offline, retrospex}. While helpful, these approaches treat the critic as an auxiliary filter rather than an intrinsic objective. This reliance on external calibration has two drawbacks: it prevents the LLM from becoming a self-contained agent that can be further evolved, and it leaves unaddressed the distribution shift between the offline training data and the policy's own distribution at test time.

\begin{table}[t]
\centering
\setlength{\tabcolsep}{3.5pt} 
\small
\caption{Comparison with the alternative methods. \cmark~= explicitly supported. \xmark~= not supported. SE: self-evolve; N-DS: no need for discrete state; Bellman: Bellman backup for credit assignment; PR: assign process rewards. N-EO: no need for extensive online interaction. Self-train: learn from self-collected experiences.}
\label{tab:method_comparison}
\begin{tabular}{lcccccc}
\toprule
{Method} & {SE} & {N-DS} & {Bellman} & {PR} & {N-EO} & {Self-Train} \\
\midrule
BC & \xmark & \cmark  & \xmark &  \xmark&   \cmark&  \xmark \\
PPO &  \xmark  & \cmark  & \cmark  & \xmark  &   \xmark& \cmark\\
ETO &\xmark & \cmark & \xmark &\xmark  & \cmark& \cmark \\
QLASS & \xmark &\xmark  & \cmark  & \cmark  &   \xmark&\cmark\\
GiGPO & \xmark &  \xmark& \xmark& \cmark  &   \xmark& \cmark \\
\hline
Q-Evolve & \cmark & \cmark & \cmark  &\cmark  & \cmark &\cmark \\
\bottomrule
\end{tabular}
\end{table}

To address these limitations, we propose Q-Evolve, a self-evolving framework for training LLM agents that unifies automatic process-reward acquisition and in-distribution policy optimization within a closed learning loop.
Unlike prior pipelines that rely on static reward models or one-shot offline critics, our framework enables the agent to iteratively improve itself by deriving step-wise supervision from an evolving Q-based critic and updating the policy via behavior-proximal policy optimization to avoid distribution shift.
Crucially, this design allows policy, critic, and data to co-evolve, while each policy update remains grounded within a hybrid offline dataset, thereby mitigating distribution shift and stabilizing long-horizon credit assignment.
 We compare our method with alternatives in Table~\ref{tab:method_comparison}.
To address sparse and delayed feedback, we derive step-wise supervision by learning an in-distribution value function on a hybrid dataset that combines expert demonstrations with agent-generated trajectories. This mixture is crucial for stability: Bellman backups on purely self-generated data can be dominated by stochastic noise, especially when success rates are low, whereas expert data provides sparse-signal grounding that stabilizes the value targets.
Specifically, we estimate an in-distribution critic~\citep{kostrikov2022offline}, utilizing a weighted Implicit Q-Learning objective to address episodic rewards. 
Rather than training a standalone PRM, we derive advantage estimation as process rewards via Generalized Advantage Estimation (GAE)~\citep{gae} for policy learning. This approach utilizes Bellman propagation to ``fill in'' missing intermediate rewards without requiring environment backtracking or external human labeling.
With the step-wise process reward signals obtained, the policy is then updated via a behavior-proximal policy optimization~\citep{zhuang2023behavior}, aiming to amplify beneficial actions and suppress harmful ones. To further promote generalization, we introduce 
a more permissive clipping on the objective's lower bound, allowing for larger policy updates when suppressing bad responses.

In summary, our work makes the following contributions. First, we propose Q-Evolve, a self-evolving framework that jointly performs automatic process reward labeling and in-distribution policy learning, keeping the agent's single-step policy improvement strictly within fixed hybrid off-policy data while enabling iterative improvement toward optimal long-horizon behavior via critic updates. Second, to handle extremely sparse and delayed rewards, we train an in-distribution critic with a weighted Implicit Q-Learning objective over a hybrid dataset, which mixes expert demonstrations with on-train agent trajectories.
Third, we perform in-distribution policy learning via behavior-proximal policy optimization, along with down-weighting thinking and clipping lower to amplify positive-advantage tokens and explicitly suppress negative-advantage ones.
Finally, we evaluate Q-Evolve on AlfWorld, WebShop, and ScienceWorld, achieving consistent improvements over strong baselines in sample efficiency and effectiveness.




\section{Related Work}

In this section, we review the LLM agents, process reward models, and self-evolving agents.

\textbf{LLM agents.} 
Driven by the rapid advancement of LLMs, language-based agents have demonstrated strong performance across diverse domains, such as code~\citep{roziere2023code}, math~\citep{luo2023wizardmath,yuan2023scaling}, game~\citep{wangvoyager,liuagentbench,Fang_Deng_Zhang_Shi_Chen_Pechenizkiy_Wang_2024,zhang2025ruag}, computer use~\citep{hong2024cogagent, liu2026infiguiagent, wang2025large} and robotics~\citep{wang2025large}.
By using natural language both to reason and to interact with environments, these agents can generalize across tasks and provide greater flexibility than traditional reinforcement learning agents~\cite{yao2022react,shinn2023reflexion}.
Recent work further extends their capabilities through planning~\citep{song2023llm,zhao2023large,chenscaling}, and tool use~\citep{yuan-etal-2025-easytool,lu-etal-2025-toolsandbox}, expanding the range of settings where they can be applied.
Nevertheless, achieving reliable long-horizon decision-making remains difficult, with persistent issues such as sparse rewards, credit assignment, and poor sample efficiency~\cite{rudder, randomized_return_decomposition, grd,gigpo}.

\textbf{Process reward models} (PRMs) have been studied mainly for multi-step reasoning problems, such as mathematical problem solving~\citep{cobbe2021training}, where they provide step-level supervision on intermediate reasoning instead of relying only on final outcome rewards~\citep{lightman2023let,uesato2022solving}. Early PRMs were trained with human-annotated process supervision~\citep{uesato2022solving}, while more recent work explores computing PRMs automatically, e.g., by treating them as Q-value estimates~\citep{wang2024math,luo2024improve}. PRMs have been used both to train generators~\citep{shao2024deepseekmath} and to enable test-time scaling via beam search~\citep{snell2024scaling}, heuristic search~\citep{ma2023let}, or tree search~\citep{wu2024inference}.
In contrast, PRMs are much less discussed in LLM agent settings~\citep{lin2025qlass,choudhury2025process}. \citet{lin2025qlass} extract Q-value labels by constructing an exploration tree and Bellman backup as the process reward modeling, while \citet{choudhury2025process} builds AgentPRM and InversePRM within an RLHF framework and highlights key challenges and opportunities for PRMs in LLM agents. However, those methods all overlook the risk of the distribution shift.


\textbf{Self-evolving agents.} Recently, self-evolving agents have gained significant attention, shifting the focus from training static models to developing agents capable of iterative improvement~\citep{gao2026survey}. Such agents can autonomously learn from experience and improve their capabilities in an open-ended manner. Existing work studies several self-improvement mechanisms, including reflective reasoning and augmenting behavior with memories from prior interactions~\citep{ouyang2025reasoningbankscalingagentselfevolving,qian2024investigate, liang2024self,guan2024richelieu}. More naturally, self-evolution could also be achieved through updating the training data with agents' interaction with tasks (self-train), where agents generate novel experiences without relying on specific human-provided data~\citep{dou2024re,guo2025seagent, qiwebrl, sun2025seagent}. Notably, self-evolving agents are related to, but distinct from, online RL: rather than requiring continual environment interaction during policy learning, self-evolving agents target practical, rapid policy improvement through iterative updates.


\begin{figure*}[t]

    \centering
    \includegraphics[width=0.8\linewidth]{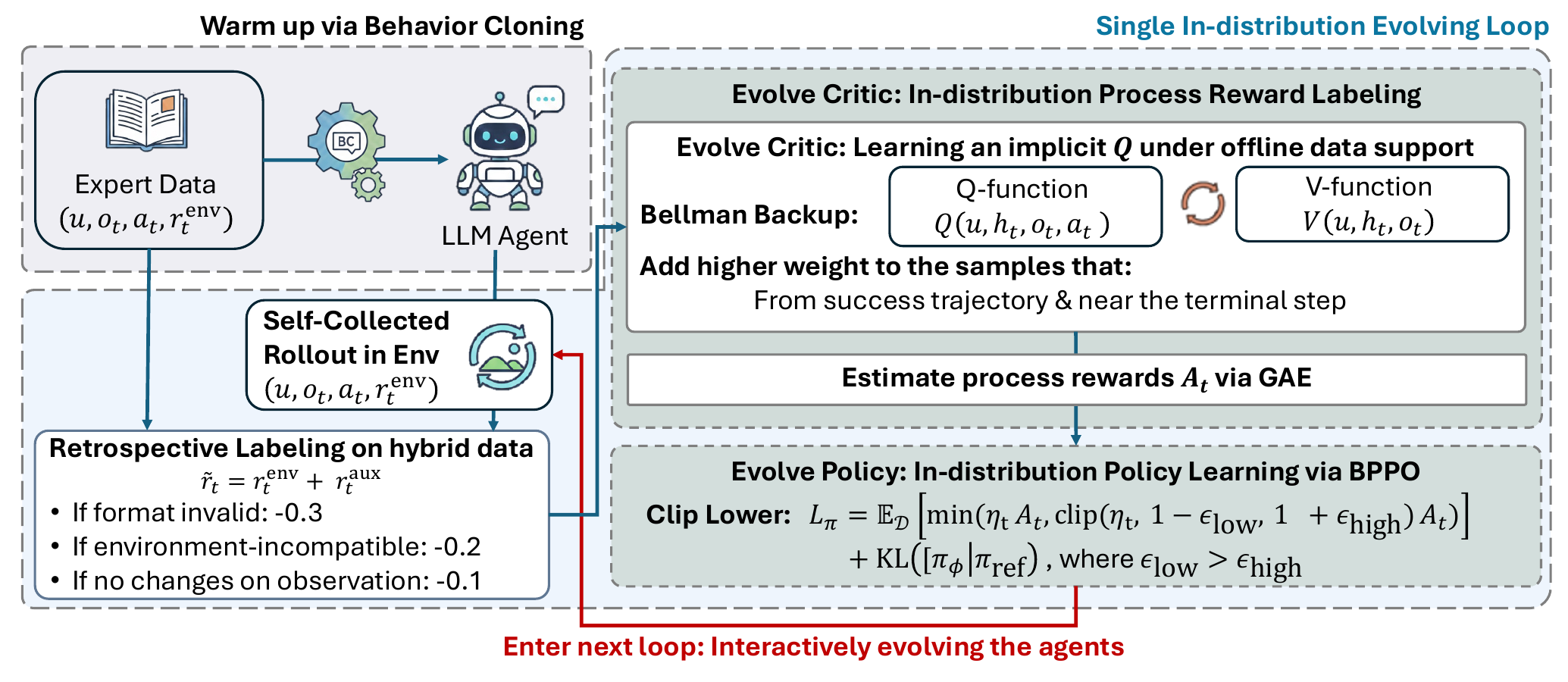}
    
\caption{
Framework of our self-evolving agent. 
We first warm up the policy via behavior cloning. 
Then, the agent is iteratively optimized through multiple \textcolor{lakeblue}{\textbf{in-distribution evolving loops}}. 
In each loop, we construct a hybrid offline buffer by combining expert demonstrations with self-collected trajectories, followed by rule-based retrospective labeling to initialize reward signals. 
\textcolor{OliveGreen}{\textbf{In-distribution Reward Assignment and Policy Improvement:}} 
Rewards are propagated via Bellman backups to learn a surrogate of the max-Q operator, from which step-level advantages (GAE) are derived. These advantages are further redistributed to the token level to enable provable policy optimization. 
\textcolor{BrickRed}{\textbf{Interactive Improvement:}} 
The overall framework forms a closed-loop evolution process, where the policy, critic, and dataset co-evolve, while each update remains constrained within the in-distribution data of each evolving iteration.
}
    \label{fig:figure2}
\end{figure*}

\section{Preliminary}

In this section, we review implicit q learning and behavior proximal policy optimization. We take the following example: learning policy from a dataset $\mathcal{D}$, which consists of trajectories $\tau=\{u\}\cup\{(o_t, a_t, r_{t+1})\}_{t=0}^{T-1}$ with task description $u$, observation $o_t$, action $a_t$, reward $r_{t+1}$ and the historical information $h_t=(u, o_1, a_1, \cdots, o_{t-1}, a_{t-1})$. 

\textbf{Implicit Q-Learning (IQL)}~\cite{kostrikov2022offline} is to learn a critic without explicitly maximizing over out-of-distribution actions. In principle, there are two separate modules in the critic: a max-Q operator surrogate $V(u, h_t, o_t)$ that approximates an expectile of the
action-value distribution induced by the dataset actions, as well as a Q-function $Q(u, h_t, o_t, a_t)$. Specifically, $V$ is optimized
by minimizing an asymmetric  regression loss 
\begin{equation}
\small
\begin{array}{cc}
L_V =\mathbb{E}_{\mathcal{D}}\Big[
L_2^m\big(\bar{Q}(u, h_t, o_t, a_t) - V(u, h_t, o_t)\big)
\Big],
\end{array}
\label{eq:iql_v}
\end{equation}
where $m\in(0,1)$ controls the expectile level, $\bar{Q}$ denotes a slowly-updated
target network and $L_2^m(\cdot)$ is the asymmetric squared loss defined as
$L_2^m(\delta)=\big|m - \mathbbm{1}(\delta < 0)\big|\,\delta^2$.
Given $V$, $Q$ is optimized via:

\begin{equation}
\small
\begin{array}{l}
L_Q =
\mathbb{E}_{\mathcal{D}}
\Big[ \big(r_{t+1} +
 \gamma V(u, h_{t+1}, o_{t+1}) - Q(u, h_t, o_t, a_t)\big)^2
\Big].
\label{eq:iql_q}
\end{array}
\end{equation}
IQL does not require an explicit behavior policy and only relies on the dataset
actions, which help mitigate extrapolation errors in training with offline trajectories.


\section{Methodology}

To address long-horizon delayed rewards without incurring the distribution-shift risks of existing PRM pipelines, our goal is to, given a shared off-policy dataset,
\begin{itemize}[nosep, leftmargin=1em]
    \item automatically transform trajectory-level outcomes into stepwise learning signals,
    \item improve the policy strictly within the same data used for process reward labeling,
    \item  iteratively co-evolve process reward supervision and policy through self-evolving loops, progressively approaching better long-horizon performance.
    
\end{itemize}

Crucially, we seek to achieve this through a \emph{self-evolving} learning paradigm, in which the agent iteratively improves itself by (i) generating new experience with its current policy, (ii) re-deriving process-level supervision via an evolving critic, and (iii) updating the policy in a data-constrained manner.
This results in a closed-loop evolution process in Q-Evolve, where the policy, critic, and dataset co-evolve, while each policy update remains grounded within the in-distribution hybrid dataset of each iteration.

As shown in Figure~\ref{fig:figure2}, we instantiate this paradigm via a data-constrained inner loop that combines process reward acquisition and policy optimization, and further enables self-evolution by periodically refreshing the offline data with newly collected experience.

\subsection{Behavior Cloning as Policy Warmup}

\label{sec:warmup}
Behavior cloning (BC) offers a simple yet effective approach to initializing agents by replicating expert demonstrations. 
Define an expert trajectory as 
$\tau^{\text{exp}} = (u, o_1, a_1, r_2, o_2, a_2, r_3 \dots, o_{t-1}, a_{t-1}, r_T, o_T)$, 
where $u$ is the task description, $o_t$ the observation at step $t$, $a_t$ the corresponding expert action including chain-of-thoughts, $r_t$ the environmental rewards, $h_t=(o_1, a_1, \dots, o_{t-1}, a_{t-1})$ the historical observations and actions up to step $t$. 
The expert dataset $\mathcal{D}_{\text{expert}}$ is then defined as the collection of expert trajectories.  
Then we optimize the policy $\pi_{\theta}$ by minimizing the negative log-likelihood of expert actions conditioned on the history:
$\mathcal{L}_{\text{BC}}
    = - \mathbb{E}_{(u,h_t,a_t) \sim \mathcal{D}_{\text{expert}}} 
    \big[ \log \pi_{\theta}(a_t \mid u, h_t, o_t) \big]$,
where $\theta$ denotes the parameters of the LLM agent policy $\pi_{\theta}$ which output the action in natural language.
This objective encourages the policy to mimic expert demonstrations by reproducing the correct action sequence given the task description and historical information. 
 We denote the policy for this stage as $\pi_{\text{BC}}$.

\subsection{Data Preparation}

Long-horizon interactive tasks feature sparse, delayed rewards and frequent execution errors, making it hard to obtain reliable step-wise supervision from raw trajectories. To enable scalable process-level training signals without additional environment access, we first construct a hybrid offline dataset and then retrospectively relabel each step with auxiliary rewards from textual feedback.

\label{sec:hrr}
\textbf{Hybrid Data Construction.} To automatically obtain informative process-reward labels, we deliberately construct a \emph{mixed} offline dataset by combining expert demonstrations with the agent’s own interaction trajectories, as each source resolves a complementary bottleneck in long-horizon learning. Expert data typically contains the key steps and successful subroutines required to solve the task, providing high-quality guidance for both process reward labeling and policy learning. Meanwhile, self-collected experience exposes the policy’s actual state-action coverage, including diverse failure modes and locally plausible but wrong actions, so that process supervision is calibrated to where the agent makes mistakes under its true behavior distribution.
As a result, we obtain a mix dataset $\mathcal{D} = \mathcal{D}_{\text{expert}}\cup\mathcal{D}_{\text{self}}$, where $\mathcal{D}_{\text{self}}$ is the agents' rollouts in environment, following the self-train paradigm~\citep{dou2024re}. In the first policy evolution, $\mathcal{D}_{\text{self}}$ is collected by $\pi_{\text{BC}}$.

\textbf{Retrospective Reward Labeling.}
A distinctive advantage of LLM-based agents is their interpretable, transparent decision process, where both observations and actions are expressed in natural language, and the environment often provides explicit textual feedback. Inspired by this, we leverage this property to perform \emph{Retrospective Reward Labeling}, i.e., equip each timestep with rule-based auxiliary rewards by parsing the observation feedback to identify invalid actions and meaningless actions.
The above procedure does not require access to the environment dynamics, making it practical and broadly applicable.

Specifically, given an offline trajectory $\tau\sim\mathcal{D}$, where $\tau=\{u\}\cup\{(o_t, a_t, r_{t+1})\}_{t=0}^{T-1}$,
we retrospectively detect execution failures by (i) validating the format of $a_t$ and (ii) inspecting the subsequent observation $o_{t+1}$, which often explicitly reports invalid actions or execution errors to relabel each step with an auxiliary reward ${r}^{\text{aux}}_t$,
\begin{equation}
\small
{r}^{\text{aux}}_t =
\begin{cases}
r^{\text{fmt}}, & \text{if } o_{t+1} \text{ indicates a format error}\\
r^{\text{inv}}, & \text{if } o_{t+1} \text{ reflecting an invalid action}\\
r^{\text{repeat}}, & \text{if }  o_{t}=o_{t+1}\\
0, & \text{otherwise}
\end{cases}
\label{eq:hrr-invalid}
\end{equation}
for each timestep. 
In this way, we provide sparse but fine-grained reward guidance by
penalizing non-executable steps immediately, which helps disentangle action validity from task success, promoting the agent to maintain valid environment interactions as much as possible. Please refer to Appendix~\ref{app:hyper-param} for more details.

\subsection{Estimate Advantage as Process Reward}

Below we present how to automatically collect process rewards via advantage estimation, while addressing the challenging critic learning introduced by episodic rewards.

\textbf{In-distribution Critic Learning.}
\label{sec:iql} Recall our goal is to strictly constrain the process reward labeling and policy learning within a fixed dataset. Therefore, a natural choice is to adopt Implicit Q-Learning (IQL), which explicitly avoids learning over out-of-distribution actions. However, even with principal Bellman Backup in IQL, learning a reliable critic remains challenging under sparse and delayed feedback: Bellman backups can, in principle, propagate terminal supervision backward to earlier steps, but this mechanism often becomes ineffective when episodic rewards are extremely sparse: most transitions receive zero reward, and the learning targets are dominated by noisy bootstrapping. The issue is particularly pronounced for weak agents in early-stage exploration, where trajectories are failure-prone and seldom reach rewarding terminal states, leaving the offline data heavily skewed toward low-signal regions. 

To address this, our remedy is threefold, targeting data coverage, learning stability, and signal prioritization. First, we build a hybrid offline dataset that combines expert demonstrations (to cover successful, high-reward regions) with rollouts from weaker policies (to reflect the agent’s exploration distribution), providing both informative successes and hard negatives. Second, we decouple critic learning from policy improvement by first training an in-distribution offline critic on the fixed dataset, which avoids a noisy co-learning feedback loop and yields more stable value estimates for downstream supervision. Third, we adopt a simple but efficient weighted IQL objective that prioritizes informative samples by upweighting steps from successful trajectories and placing larger weights on later steps that are more correlated with terminal outcomes. 

\textbf{Weighted IQL objective.}
During critic training, we use a shaped reward
$r_{t+1}=r^{\text{env}}_{t+1}+r^{\text{aux}}_{t+1}$,
where $r^{\text{env}}_{t+1}$ is the episodic reward returned by the environment and
$r^{\text{aux}}_{t+1}$ is assigned by retrospective labeling.
We then learn the in-distribution critic with IQL (Eq.~\ref{eq:iql_v} and Eq.~\ref{eq:iql_q}), but reweight each transition to prioritize informative supervision.
Concretely, for a trajectory of length $T$, we assign a step weight
\begin{equation}
\small
    w_t=\left(t/T+d \right)\cdot 0.5+0.5,\quad t=1,\dots,T,
\end{equation}
where $d\in\{0,1\}$ indicates whether the trajectory terminates with non-zero episodic reward.
This design (i) upweights successful trajectories ($d=1$) and (ii) places larger weights on later steps that are more correlated with terminal outcomes.
We incorporate $w_t$ by reweighting the per-transition losses in IQL. For example, the expectile regression becomes
\begin{equation}
\small
    L_V=\mathbb{E}\!\left[w_t \cdot L_2^\tau(\bar{Q}(u, h_t, o_t, a_t)-V(u, h_t, o_t))\right],
\end{equation}
and we apply the same weighting to the Q-function regression loss.
With these weighted objectives, the critic receives stronger signals from informative steps, leading to more stable and reliable value estimates for downstream process-reward labeling and policy improvement.


\textbf{Obtain Process Reward Labels}. Given a learned critic, consisting of functions $V$ and $Q$, we use advantage to assess action quality and treat it as a process-level reward signal that reflects long-term returns. To obtain robust advantage estimation, we compute step-wise advantages with generalized advantage estimation (GAE)~\citep{gae}, rather than directly taking the difference of $Q$ and $V$.

For a trajectory $\tau=\{(u, o_t,a_t,r_{t+1})\}_{t=0}^{T-1}$ and $V$ function, we have
\begin{equation}
\small
\begin{array}{cc}
\delta_t = r^{\text{env}}_{t+1} + \gamma V(u, h_{t+1}, o_{t+1}) - V(u, h_t, o_t),
\\
A_t = \delta_t + \lambda \gamma A_{t+1}, \qquad A_T=0,
\end{array}
\label{eq:gae}
\end{equation}
where $r^{\text{env}}_{t+1}$ is environmental episodic rewards.

We find that \emph{excluding} $r^{\text{aux}}$ from advantage estimation yields better performance than either (i) removing $r^{\text{aux}}$ entirely or (ii) also including it in advantage estimation. This design keeps the process reward aligned with the true task objective and preserves the optimal policy invariant; we refer to Appendix~\ref{app:gae_analysis} for a formal analysis.


\subsection{Self-Evolve: Policy Learning with Process Rewards}

In order to learn policy from fixed dataset in the inner-loop, our first attempt is to follow IQL and perform advantage-weighted regression (AWR) for policy learning via $L_\pi(\theta)=\mathbb{E}_{(u,h_t,o_t,a_t,A_t)\sim\mathcal{D}}\Big[\exp(A_t)\,\log \pi_\theta(a_t\mid u,h_t,o_t)\Big]$.
However, we found this formulation prone to overfitting: it monotonically increases the likelihood of actions present in the offline dataset, and provides no mechanism to explicitly \emph{decrease} the probability of actions with negative process-reward signals.

Therefore, we adopt an alternative behavior proximal policy objective (BPPO) that uses the sign and magnitude of process rewards to both upweight beneficial actions and suppress negatively labeled ones.

\textbf{In-distribution Policy Optimization.}
\label{sec:policy} 
Similar to learn the critic, our goal is to update the policy over \emph{dataset state and actions only}. Concretely, we use a clipped, behavior-proximal policy objective:
\begin{equation}
\small
\begin{array}{cc}
\mathcal{L}_{\pi}(\theta)
=
\mathbb{E}_{\mathcal D}
\Big[
\min\Big(
\eta_tA_t,\;
\mathrm{clip}\big(\eta_t,\,1-\epsilon_{\text{low}},\,1+\epsilon_{\text{high}}\big)A_t
\Big)
\Big] \\
+\alpha\,\mathrm{KL}(\pi_\phi \mid \pi_{\text{ref}}),
\end{array}
\label{eq:final_policy_loss}
\end{equation}
where $\eta_t=\frac{\pi_\phi(a_t\mid u, h_t, o_t)}{\pi_{\text{old}}(a_t\mid u, h_t, o_t)}$ is the importance ratio between the current policy and a lagged behavior policy $\pi_{\text{old}}$ used to generate the offline trajectories, $\alpha\in[0,1]$ controls KL regularization strength, and $\epsilon_{\text{low}},\epsilon_{\text{high}}$ are hyper-parameters. 

\emph{Remark.} Eq.~\ref{eq:final_policy_loss} follows the PPO-style clipped surrogate, but differs in how the advantage signal is obtained. Instead of training an online critic with extensive online interactions, we directly use the labeled process rewards $A_t$ introduced above as step-wise advantages for policy updates. Moreover, we use an asymmetric clipping scheme with $\epsilon_{\text{low}}>\epsilon_{\text{high}}$: we permit more aggressive suppression (larger decreases) for negatively labeled actions while keeping probability increases more tightly constrained for preventing overfitting. This encourages conservative, in-support updates that suppress harmful actions without aggressively extrapolating beyond the offline data distribution.

%
%
We denote the optimized policy via Eq~\ref{eq:final_policy_loss} as $\pi_{\text{evolved}}$.

\textbf{Interactive Improvement.}
The procedure above defines an \emph{inner loop} that learns an improved policy $\pi_{\text{evolve}}$ from a fixed hybrid dataset.
While each inner-loop update is strictly grounded within the in-distribution data of the current policy, the resulting policy can be used to correct the data distribution and unlock further improvements.
Concretely, we use $\pi_{\text{evolve}}$ to interact with the environment and collect new trajectories, forming $\mathcal{D}^{\pi_{\text{evolve}}}$, and then refresh the off-policy dataset by merging them with expert demonstrations:
$\mathcal{D} \leftarrow \mathcal{D}^{\text{exp}} \cup \mathcal{D}^{\pi_{\text{evolve}}}$.
With the updated dataset, we rerun the inner loop to relearn an in-distribution critic and relabel process rewards, yielding a stronger policy for the next iteration. 
Overall, the pipeline forms a closed-loop evolution process where the policy, critic, and dataset co-evolve, while each policy update remains grounded within the in-distribution hybrid dataset of the current iteration.
In our experiments, we run two training loops for SciWorld and AlfWorld, while three loops for WebShop.

The complete Q-Evolve procedure is summarized in Algorithm~\ref{alg:q-evolve}.

\begin{algorithm}[t]
\small
\caption{Q-Evolve}
\label{alg:q-evolve}
\DontPrintSemicolon
\KwIn{Expert dataset $\mathcal{D}_{\text{expert}}$; environment Env; iterations $K$}
\KwOut{Evolved policy $\pi_\theta$}
Warm up $\pi_\theta$ via behavior cloning on $\mathcal{D}_{\text{expert}}$\;
\For{$k = 1, \ldots, K$}{
    \tcp{Stage 1: Hybrid Data Construction}
    $\mathcal{D}_{\text{self}} \leftarrow \texttt{rollout}(\pi_\theta, \text{Env})$;\quad
    $\mathcal{D} \leftarrow \mathcal{D}_{\text{expert}} \cup \mathcal{D}_{\text{self}}$\;
    Assign $r^{\text{aux}}_t$ to each step via retrospective labeling (Eq.~\ref{eq:hrr-invalid})\;
    
    \tcp{Stage 2: In-distribution Critic Learning}
    Train $V$, $Q$ on $\mathcal{D}$ by minimizing $\mathcal{L}_V$, $\mathcal{L}_Q$ (Eqs.~\ref{eq:iql_v}--\ref{eq:iql_q})\;
    
    \tcp{Stage 3: Process Reward Derivation}
    Compute $A_t$ via GAE on $r^{\text{env}}$ and $V$ (Eq.~\ref{eq:gae})\;
    
    \tcp{Stage 4: In-distribution Policy Optimization}
    Update $\pi_\theta$ by maximizing $\mathcal{L}_\pi$ (Eq.~\ref{eq:final_policy_loss})\;
}
\Return $\pi_\theta$
\end{algorithm}

\section{Experiments}

Here we empirically evaluate our method and compare it against a range of strong baselines. 

\begin{table*}[t]
\centering
\small
\caption{Performance comparison on WebShop, SciWorld (Seen/Unseen), and ALFWorld (Seen/Unseen). 
The best result is \textbf{bolded} and the second-best result is \underline{underlined}.}
\begin{tabular}{l|c|cc|cc|c}
\toprule
\multirow{2}{*}{Method} & \multirow{2}{*}{WebShop} & \multicolumn{2}{c|}{SciWorld} & \multicolumn{2}{c}{ALFWorld} & \multirow{2}{*}{Average} \\
\cmidrule(lr){3-4} \cmidrule(lr){5-6}
 & & Seen & Unseen & Seen & Unseen & \\
\midrule
GPT-4              & 63.2 & 64.8 & 64.4 & 42.9 & 38.1 & 54.7 \\
GPT-3.5-Turbo      & 62.4 & 16.5 & 13.0 & 7.9  & 10.5 & 22.1 \\
Reflexion~\citep{shinn2023reflexion} & 64.2 & 60.3 & 64.4 & 45.7 & 55.2 & 58.0 \\
\midrule
Base Agent (Llama-2-7B-Chat) & 17.9 & 3.8  & 3.1  & 0.0  & 0.0  & 5.0 \\
SFT~\cite{chen2023fireact}                & 63.1 & 67.4 & 53.0 & 60.0 & 67.2 & 62.1 \\
RFT~\citep{zhang2023cumulative} & 63.6 & 71.6 & 54.3 & 62.9 & 66.4 & 63.8 \\
PPO~\citep{schulman2017proximalpolicyoptimizationalgorithms} & 64.2 & 59.4 & 51.7 & 22.1 & 29.1 & 45.3 \\
Best-of-N          & 67.9 & 70.2 & 57.6 & 62.1 & 69.4 & 65.4 \\
ETO~\citep{song2024trial} & 67.4 & 73.8 & 65.0 & 68.6 & 72.4 & 69.4 \\
DMPO~\citep{dmpo} & 70.1 & 72.4 & 61.7 & - & - & - \\

QLASS~\cite{lin2025qlass}              & \underline{70.3} & \underline{75.3 }& \underline{66.4} & \underline{77.9} & \underline{82.8} & \underline{74.5} \\
\midrule

\textbf{Q-Evolve (Ours)}               &  \textbf{70.5} & \textbf{76.3} & \textbf{69.7 } & \textbf{90.7}  & \textbf{89.6}  &  \textbf{79.4}  \\
\bottomrule
\end{tabular}
\label{tab:main-results}
\end{table*}

\subsection{Setup}

We first describe the experimental setup, including the base model used for agents and the evaluation environments. 

\textbf{Agent Base Model and Rollout.}  We use Llama2-7B-Chat~\citep{touvron2023llama2openfoundation} as the base model for building LLM agents following~\citet{song2024trial,lin2025qlass}. 
We list all the prompts used in Appendix~\ref{app:prompts}. For self-collected data, we sample $3$ trajectories per task, while the number of tasks is shown in the Appendix (Table~\ref{tab:tasks}).

\textbf{Evaluation Tasks} We conduct experiments on three environments that naturally exhibit delayed rewards, \textsc{AlfWorld}~\citep{alfworld} for embodied house holding tasks, \textsc{WebShop}~\citep{WebShop} for web navigation, and \textsc{ScienceWorld}~\citep{scienceworld} for embodied science experiments. 
\textsc{AlfWorld} is a text-based embodied environment where agents must complete household tasks through long action sequences, making credit assignment particularly challenging. The agent only receives a binary reward at the final step, 1 for success and 0 for failure.
\textsc{WebShop} evaluates goal-oriented dialogue and decision making in an online shopping environment, where rewards are only observed after the action, ``click [Buy Now]''. If the purchased item satisfies all required attributes, the agent receives a reward of 1; otherwise, the agent receives a reward proportional to the number of attributes it meets. 
\textsc{ScienceWorld} is a text-based virtual environment requiring the agents to complete tasks with subgoals, with sparse success signals from 0 to 1 provided at the end of episodes to indicate the achievement of the subgoals. For ScienceWorld and ALFWorld, we evaluate both seen and unseen tasks to investigate the generalization of agents.   We report the average accumulated rewards as the evaluation metrics.


\textbf{Baselines.} 
We compare our method against three categories of baselines: 
(1) \emph{zero-shot LLMs}, including GPT-3.5-Turbo and GPT-4 with ReAct prompting~\citep{shinn2023reflexion}, which are directly applied without task-specific adaptation; 
(2) \emph{fine-tuned LLMs} trained without reward redistribution, such as \textbf{SFT}~\citep{chen2023fireact}, which is supervised fine-tuning on expert trajectories, and \textbf{RFT} (Rejection sampling Fine-Tuning)~\citep{zhang2023cumulative}, a self-improvement baseline trained on merged successful trajectories and expert data; 
(3) \emph{LLMs with existing reward redistribution strategies}, including \textbf{ETO}~\citep{song2024trial}, which updates policies via constructing trajectory-level preference pairs and DPO, \textbf{DMPO}~\citep{dmpo}, which utilizes a multi-turn preference objective to optimize the agent, and \textbf{PPO}~\citep{schulman2017proximalpolicyoptimizationalgorithms}, a reinforcement learning baseline optimizing the final reward. 
In addition, we also evaluate inference-time strategies such as \textbf{Best-of-N} (with $N=6$), \textbf{QLASS}~\cite{lin2025qlass}, which constructs an exploration tree to estimate the Q-value of state-action pairs, enabling planning on a behavior cloning agent, and closed-source agents like GPT-4o with \textbf{Reflexion}~\citep{shinn2023reflexion}.

\subsection{Main Result}

Table~\ref{tab:main-results} shows that our method achieves the best overall performance across all benchmarks, obtaining the highest average score among all baselines. 
Compared with QLASS, which similarly relies on value-based signals, Q-Evolve achieves substantially higher scores while reducing dependence on heavy online sampling (600K for QLASS and 20K for Q-Evolve in AlfWorld). In particular, QLASS estimates $Q$-values through online rollouts and search, whereas Q-Evolve learns an in-distribution critic and derives process rewards largely from offline relabeling, enabling more sample-efficient and robust improvements under limited additional interaction.
Compared with ETO, Q-Evolve achieves better overall performance, which we attribute to a more stable inner-loop self-evolution that grounds policy updates in a hybrid offline dataset via in-distribution critic learning and process-level supervision.
Compared with baselines that do not explicitly address the episodic rewards, Q-Evolve consistently performs better across all tasks, highlighting the benefit and significance that Q-Evolve provides denser and more reliable credit assignment.



\subsection{Ablation Study}

In this subsection, we conduct several ablations to investigate the components in the proposed Q-Evolve. Without specification, we adopt only one interactive policy learning in the ablation study.

\textbf{Using process rewards \emph{with} vs.\ \emph{without} support of the data distribution.} We create an ablation version of w/o PI, using process rewards for out-of-distribution implicit policy learning, where we use the critic to perform test-time scaling. Specifically, we rerank the candidate answers produced by $\pi_{\text{BC}}$, according to the score from the critic, $Q- V$. As shown in Figure~\ref{tab:ablation}, empirically, in-distribution policy learning (Full and w/o GAE)  outperforms  $\pi_{\text{BC}}$. In contrast, using critic test-time scaling does not necessarily help, and can even underperform $\pi_{\text{BC}}$, caused by a distribution mismatch. Such a distribution shift comes from two aspects: first, the models might provide unseen action candidates that may lie outside the PRM’s reliable scoring regime; second, the environmental dynamics might push the agent toward unseen states, which are even worse when the policies are implicitly improved. By contrast, our policy training partially mitigates this issue by aligning the policy’s learning data distribution with the PRM’s distribution and relying on generalizable estimation to estimate a more robust advantage.
Therefore, it is crucial to \emph{use process reward labels within a controllable, in-distribution regime where its scoring remains reliable}.


\textbf{Ablation on key components.}
Table~\ref{tab:ablation} presents a component-wise ablation of our framework on ALFWorld. Overall, removing any single module degrades performance, confirming that the final gains come from their synergy rather than a single trick. First, removing retrospective relabeling (w/o RT) leads to a clear performance drop, indicating that RT provides important intermediate learning signals and improves generalization by identifying obvious failures without additional environment interaction.
 Second, \textbf{w/o W-IQL}, replacing weighted IQL by standard IQL causes a clear regression, suggesting that weighted IQL improves robustness of the critic under episodic rewards. Third, GAE is a key bridge from critic estimation to policy supervision: removing GAE (w/o GAE) leads to a substantial decline, highlighting the importance of obtaining high-quality advantage estimation. Finally, \textbf{one-step policy improvement (PI)} is indispensable for translating step-wise signals into actual policy gains, which we explain with more details in the ablation of in-distribution policy learning \emph{vs.} out-of-distribution policy learning and ablation on the choice of policy learning. Together, these ablations validate that (i) RR, W-IQL, and GAE yield reliable advantage signals, and (ii) PI is the dominant mechanism that converts those signals into consistent improvements.

\begin{table}[t]
\centering
\setlength{\tabcolsep}{3.5pt}      %
\small
\caption{Ablation study on key components on AlfWorld. RR: Retrospective relabeling; W-IQL: Weighted IQL, GAE: generalized advantage estimation and PI: One-step Policy Improvement. w/o PI: using critic in test-time scaling. w/o PI + AWR: using critic with advantage weighted regression for policy improvement.}
\begin{tabular}{lccccccc}
\toprule
Variant & RR & {W-IQL} & {GAE} & {PI}  & {Seen}  & {Unseen} \\
\midrule
Q-Evolve (1-iter)  & \cmark & \cmark &\cmark & \cmark  & \textbf{87.9}  & \textbf{86.6}  \\
SFT  & \xmark & \xmark &\xmark & \xmark  &  60.0  &  67.2\\
\midrule
 w/o RR  & \xmark & \cmark & \cmark & \cmark &  83.6 & 82.7  \\
 w/o W-IQL  & \cmark & \xmark & \cmark & \cmark & 83.6  & 76.1  \\
 w/o GAE & \cmark & \cmark & \xmark & \cmark & 74.3  & 74.6  \\
 w/o PI & \cmark & \cmark  & \cmark & \xmark & 58.6 & 59.0  \\
  w/o PI + AWR & \cmark & \cmark  & \cmark & \cmark & 64.3  & 67.9  \\
\bottomrule
\end{tabular}
\label{tab:ablation}
\end{table}

\textbf{Comparison of process reward choice.} We compare multiple alternatives for process rewards: one step advantage $Q-V$, potential-based shaping $r^{\text{env}}+\gamma V'-V$, GAE with $r^{\text{env}}$, and GAE with $r^{\text{env}}{+}r^{\text{aux}}$, given $Q$ and $V$.
As shown in Table~\ref{tab:process_reward_choice}, GAE with $r^{\text{env}}$ outperforms other alternatives, indicating that multi-step advantage estimation provides more reliable credit assignment while keeping policy updates aligned with task success. In contrast, the one-step $Q-V$ signal is much weaker, likely due to bootstrapping noise in long-horizon tasks. Potential-based shaping $r^{\text{env}}+\gamma V'-V$, suggests that advantage could produce better temporal credit assignment, where $V'$ denotes the value estimation of next timestep.  Finally, including $r^{\text{aux}}$ inside GAE hurts overall performance, showing that heuristic auxiliary rewards may bias the policy learning, while still being useful as an auxiliary objective (Table~\ref{tab:ablation}).

\begin{table}[t]
\centering
\small
\caption{Comparison of different process-reward choices.}
\label{tab:process_reward_choice}
\begin{tabular}{lcc}
\toprule
{Process rewards} & {Seen} & {Unseen} \\
\midrule
$Q-V$& 74.3 & 74.6 \\
$r^{\text{env}} +\gamma V' - V$ & 80.0 & 74.6 \\
GAE with $r^{\text{env}}$ (Ours, 1-iter)  & \textbf{87.9} & \textbf{86.6} \\
GAE with $r^{\text{env}} + r^{\text{aux}}$  & 81.4 & 82.8 \\
\bottomrule
\end{tabular}
\end{table}

\textbf{Comparison of policy learning choice: AWR \textit{vs.} BPPO.}
As shown in Table~\ref{tab:ablation} (last row), we construct ``w/o PI + AWR'' via replacing the BPPO policy optimization with advantage-weighted regression, the same as IQL.  AWR optimizes a weighted behavior cloning objective, where all actions (including negative-advantage ones) are still imitated, differing only in their effective learning speed via advantage-dependent weights. In contrast, our objective uses signed advantage to explicitly upweight positive-advantage actions while downweighting negative ones, enabling more direct correction of harmful behaviors. This ablation highlights the importance of explicit negative-action suppression in long-horizon policy improvement.


\begin{figure}[t]
    \centering
    \includegraphics[width=0.85\linewidth]{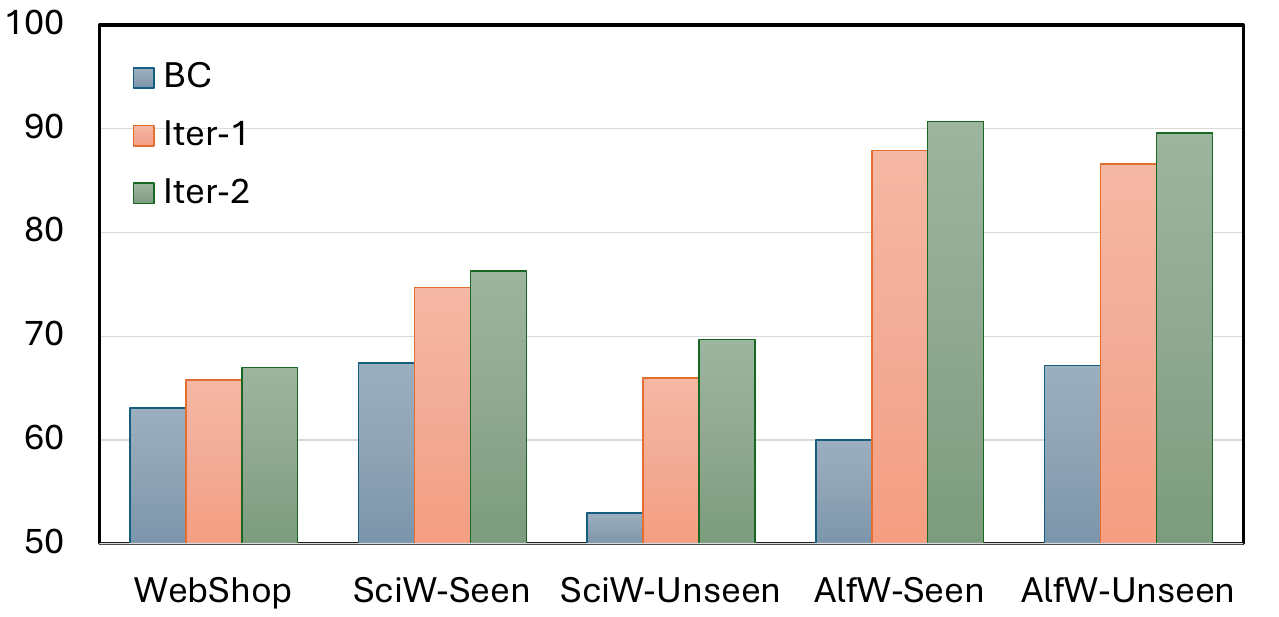}
    \caption{Ablation on interactive improvement.}
    \label{fig:interactive}
\end{figure}


\textbf{Ablation on interactive improvement.}
To isolate the effect of iterative data collection and refinement, we compare two consecutive rounds of our self-evolving pipeline, shown in Figure~\ref{fig:interactive}. The consistent gain from the first loop (Iter-1) to the second loop (Iter-2) indicates that each iteration contributes additional useful supervision, and that the pipeline can stably accumulate improvements across multiple rounds rather than relying on a one-off boost.





\textbf{Sample Efficiency.}
Table~\ref{tab:sample-efficiency} provides an apples-to-apples comparison against online RL methods on ALFWorld on Qwen2.5-7B-Instruct~\footnote{https://huggingface.co/Qwen/Qwen2.5-7B-Instruct} as the backbone, where all online RL methods are trained for 320K environment steps.
Q-Evolve (1-iter) uses only 13K environment steps, while outperforming all online RL baselines by a large margin on both seen and unseen splits. Above shows that, Q-Evolve has better sample efficiency compared with the alternative methods.

\begin{table}[t]
\centering
\small
\caption{Sample efficiency comparison on ALFWorld (Qwen2.5-7B-Instruct). All online RL baselines use 320K environment steps. \textbf{Bold}: best result.}
\setlength{\tabcolsep}{4pt}
\begin{tabular}{lrcc}
\toprule
Method & Env.\ Steps & Seen & Unseen \\
\midrule
PPO~\citep{schulman2017proximalpolicyoptimizationalgorithms}   & 320K & 59.4 & 67.7 \\
RLOO                                                           & 320K & 56.4 & 36.6 \\
GRPO~\citep{gigpo}                                             & 320K & 39.7 & 32.2 \\
SFT                                                            &  0   & 74.9 & 62.3 \\
SFT + PPO                                                      & 320K & 72.6 & 77.6 \\
SFT + RLOO                                                     & 320K & 75.0 & 51.4 \\
SFT + GRPO                                                     & 320K & 66.7 & 74.1 \\
\midrule
\textbf{Q-Evolve (1-iter)}                                     & \textbf{13K} & \textbf{88.6} & \textbf{87.3} \\
\bottomrule
\end{tabular}
\label{tab:sample-efficiency}
\vspace{-5pt}
\end{table}

\textbf{Generalization across model architectures}
\label{sec:generalization} To assess whether Q-Evolve's gains transfer across model families and scales, we evaluate on two additional settings.
Table~\ref{tab:llama3} reports results with Llama-3-8B-Instruct~\footnote{https://huggingface.co/meta-llama/Meta-Llama-3-8B-Instruct}, compared with some planning-based methods, MPO~\citep{xiong-etal-2025-mpo}, KnowAgent~\citep{zhu-etal-2025-knowagent}, WKM~\citep{qiao2024agent}, ETO+MPO.
Q-Evolve consistently outperforms all baselines across all tasks and both seen/unseen splits, demonstrating that the method is not tied to any particular model architecture or initialization.

\begin{table}[t]
\centering
\caption{Results with Llama-3-8B-Instruct. Best result in \textbf{bold}, second-best \underline{underlined}.}
\setlength{\tabcolsep}{3pt}
\begin{tabular}{lccccc}
\toprule
\multirow{2}{*}{Method} & \multirow{2}{*}{WebShop} & \multicolumn{2}{c}{SciWorld} & \multicolumn{2}{c}{ALFWorld} \\
\cmidrule(lr){3-4}\cmidrule(lr){5-6}
 & & Seen & Unseen & Seen & Unseen \\
\midrule
SFT            & 63.3 & 65.3 & 57.0 & 79.3 & 80.6 \\
ETO            & 68.4 & 81.3 & 74.1 & 77.1 & 76.4 \\
KnowAgent      & 64.8 & 81.7 & 69.6 & 80.0 & 74.9 \\
WKM            & 66.9 & 82.1 & 76.5 & 77.5 & 78.2 \\
SFT + MPO      & 65.5 & 70.2 & 65.9 & 80.7 & 81.3 \\
ETO + MPO      & \underline{70.2} & \underline{83.4} & \underline{80.8} & \underline{85.0} & 79.1 \\
\midrule
\textbf{Q-Evolve}  & \textbf{71.1} & \textbf{86.4} & \textbf{82.4} & \textbf{89.6} & \textbf{90.3} \\
\bottomrule
\end{tabular}
\label{tab:llama3}
\end{table}


We also provide an ablation on hyper-parameter $\epsilon_{\text{low}}$ and $\epsilon_{\text{high}}$ in Appendix~\ref{app:additional_results}.

\section{Conclusion}

In this work, we introduce Q-Evolve, a self-evolving framework for training LLM agents on long-horizon interactive tasks under episodic rewards. Our key idea is to derive automatic process-level supervision with in-distribution policy improvement in an inner loop. Specifically, we learn a critic from a hybrid offline dataset (expert demonstrations and agent trajectories) using weighted Implicit Q-Learning. The value function enables step-wise process-reward labeling via advantage estimation, yielding dense supervision without backtracking or human annotation. 
Guided by these signals, we adopt behavior-proximal policy optimization to improve the agent while staying within the in-distribution data of the policy for each iteration, avoiding distribution shift.
This forms a closed-loop process where the policy, critic, and dataset co-evolve. Experiments on AlfWorld, WebShop, and ScienceWorld show consistent gains in sample efficiency, robustness, and task success.

\section*{Impact Statement}
This work introduces a self-evolving framework for LLM agents that can significantly influence the deployment of autonomous systems in complex, real-world environments. 
By grounding agent evolution in a hybrid in-distribution dataset rather than unconstrained online exploration, our method provides a safer pathway for developing autonomous agents in sensitive domains
like robotics and web-based services where trial-and-error costs are high. The elimination of expensive manual step-wise annotations and the need for environment backtracking makes it feasible for smaller organizations to train sophisticated reasoning agents without massive labeling budgets or specialized simulator features. Furthermore, the use of behavior-proximal optimization and KL-regularization ensures that as agents evolve to solve specific tasks, they retain their foundational language capabilities and do not develop harmful, out-of-distribution behaviors. Ultimately, our approach demonstrates that process rewards can be derived automatically from episodic outcomes, offering a scalable alternative to human-in-the-loop supervision, which is essential as AI tasks grow in complexity beyond human monitoring capacities.
\bibliography{example_paper}
\bibliographystyle{abbrvnat}

\newpage
\appendix

\section*{Limitations}
Q-Evolve has several limitations worth noting. The retrospective reward signals depend on structured environment feedback and may require task-specific adaptation; for instance, the repetition penalty is meaningful only in environments where repeated observations reliably indicate stagnation. The self-evolving loop relies on greedy rollouts for data collection, which reduces trajectory diversity over iterations and can cause the policy to converge to locally optimal but suboptimal behaviors. Finally, while the in-distribution constraint stabilizes learning within each iteration, distribution shift accumulates across iterations as the policy evolves, and the current framework does not explicitly correct for this cross-iteration drift.

\section{Analysis of Design Choices}

\subsection{Retrospective reward design.}
The auxiliary reward values are set according to three principles that govern their interaction with the primary task signal: (i)~\emph{small magnitude} relative to the extrinsic reward, so that policy learning prioritizes task completion rather than penalty avoidance; (ii)~\emph{non-dominance in cumulative return}, ensuring that auxiliary signals shape rather than override the episodic reward; and (iii)~\emph{severity-based ordering}, where failure types are ranked by their impact on task execution---format errors invalidate the action protocol entirely and carry the highest penalty, non-executable actions violate environmental constraints and receive a moderate penalty, and repeated or no-op actions indicate ineffective exploration with the lowest penalty.
These considerations yield $(r^{\text{fmt}}, r^{\text{inv}}, r^{\text{repeat}}) = (-0.3, -0.2, -0.1)$.

Table~\ref{tab:sensitivity} evaluates the robustness of these choices on ALFWorld.
Performance is stable across alternative reward scales (\textbf{RR-alt}), but deteriorates substantially when the format penalty is inflated to $r^{\text{fmt}}=-1$ (\textbf{RR-high-fmt}), as it begins to dominate the cumulative return and undermines the task-level signal.
For weighted IQL, both the temporal and success terms contribute positively; the temporal term has a larger effect on unseen tasks, where reliable value propagation is especially important for generalization.

\subsection{Analysis of Retrospective Rewards in Advantage Estimation}
\label{app:gae_analysis}

We formally justify why $r^{\text{aux}}$ is excluded from the GAE used for policy optimization. Let $V$ be the value function trained under the full shaped reward $r^{\text{env}} + r^{\text{aux}}$, and let $A_t^{\text{env}}$, $A_t^{\text{full}}$ denote the GAE advantages computed with $r^{\text{env}}$ and $r^{\text{env}} + r^{\text{aux}}$, respectively.

\begin{proposition}
\label{prop:gae_decomp}
For all $0 \le t \le T-1$,
\[
A_t^{\text{full}} - A_t^{\text{env}} = \sum_{l=0}^{T-t-1} (\gamma\lambda)^l\, r^{\text{aux}}_{t+1+l}.
\]
\end{proposition}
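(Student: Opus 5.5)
Both advantages are built by the same backward recursion of Eq.~\ref{eq:gae} from the \emph{same} value function $V$; they differ only in the reward fed into the TD residual. The natural approach is therefore to subtract the two recursions and unroll the difference.

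Write
\[
\delta_t^{\text{full}} = r^{\text{env}}_{t+1} + r^{\text{aux}}_{t+1} + \gamma V(u,h_{t+1},o_{t+1}) - V(u,h_t,o_t)
\]
and $\delta_t^{\text{env}}$ for the same expression without $r^{\text{aux}}_{t+1}$. The value terms cancel, so
\[
\delta_t^{\text{full}} - \delta_t^{\text{env}} = r^{\text{aux}}_{t+1}.
\]
Now set $D_t := A_t^{\text{full}} - A_t^{\text{env}}$. Subtracting the two recursions $A_t = \delta_t + \gamma\lambda A_{t+1}$ gives
\[
D_t = r^{\text{aux}}_{t+1} + \gamma\lambda D_{t+1},
\]
with terminal condition $D_T = 0$, since both advantages vanish at $T$.

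Next, prove the closed form by backward induction on $t$.

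\textbf{Base case.} For $t = T-1$, the recursion gives $D_{T-1} = r^{\text{aux}}_T$. This matches the claimed sum, which has the single term $l=0$.

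\textbf{Inductive step.} Assume the formula holds at $t+1$, so $D_{t+1} = \sum_{l=0}^{T-t-2} (\gamma\lambda)^l r^{\text{aux}}_{t+2+l}$. Substitute this into $D_t = r^{\text{aux}}_{t+1} + \gamma\lambda D_{t+1}$ and reindex with $l \mapsto l+1$. This absorbs the leading $r^{\text{aux}}_{t+1}$ as the $l=0$ term and yields $\sum_{l=0}^{T-t-1}(\gamma\lambda)^l r^{\text{aux}}_{t+1+l}$.

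Alternatively, one can unroll the standard GAE identity $A_t = \sum_l (\gamma\lambda)^l \delta_{t+l}$ for both advantages and subtract termwise. This reaches the same sum, since the $V$-dependent parts are identical.

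There is no substantive obstacle. The only care needed is at the boundary, where the two computations must use the same convention. If the terminal value is taken to be $V(u,h_T,o_T)$, or is zeroed at termination, it cancels in the difference either way, as long as $\delta^{\text{full}}$ and $\delta^{\text{env}}$ treat it identically. The argument also relies on indexing $r^{\text{aux}}_{t+1}$ consistently with $r^{\text{env}}_{t+1}$, as in the shaped reward used for critic training.
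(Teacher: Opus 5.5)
Your proof is correct and takes essentially the same route as the paper: both rest on the observation that, with the same $V$, the TD residuals differ exactly by $r^{\text{aux}}_{t+1}$. The paper unrolls $A_t=\sum_{l}(\gamma\lambda)^l\delta_{t+l}$ first and then subtracts, whereas you subtract the recursions first and induct backward on $D_t$; these orderings are equivalent by linearity, and you also give the paper's version as your alternative.
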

\begin{proof}
The GAE recursion with $A_T = 0$ unrolls to $A_t = \sum_{l=0}^{T-t-1}(\gamma\lambda)^l \delta_{t+l}$, where $\delta_t = r_{t+1} + \gamma V(s_{t+1}) - V(s_t)$. Since $V$ is the same in both cases, the TD residuals differ only by $\delta_t^{\text{full}} - \delta_t^{\text{env}} = r^{\text{aux}}_{t+1}$. Subtracting the two expansions gives the result.
\end{proof}

Proposition~\ref{prop:gae_decomp} shows that including $r^{\text{aux}}$ in GAE directly adds a discounted auxiliary-return term to the policy-side advantage target. To see why this changes the policy objective, consider first the case $\lambda = 1$: GAE reduces to the Monte Carlo advantage, giving $A_t^{\text{env}} = \sum_{l \geq 0} \gamma^l r^{\text{env}}_{t+1+l} - V(s_t)$ (where the $V$-terms telescope and $V(s_T)=0$). Using only $r^{\text{env}}$ therefore keeps the policy update aligned with the original environment-return objective $J^{\text{env}}(\pi) = \mathbb{E}_\pi[\sum_t \gamma^t r^{\text{env}}_t]$ and does not change the optimal policy. For $\lambda < 1$, GAE no longer equals the Monte Carlo return exactly, but it still defines an environment-reward $\lambda$-return surrogate; the key identity $A_t^{\text{full}} - A_t^{\text{env}} = \sum_{l=0}^{T-t-1}(\gamma\lambda)^l r^{\text{aux}}_{t+1+l}$ remains exact, and excluding $r^{\text{aux}}$ keeps policy learning aligned with this environment-reward surrogate alone. This is corroborated empirically in Table~\ref{tab:process_reward_choice}.

\subsection{Weighted IQL design.}
The step-weight $w_t$ in Eq.~\ref{eq:iql_v} incorporates two complementary terms.
The temporal term assigns larger weights to later transitions, which have shorter remaining bootstrap horizons and thus more reliable TD targets under sparse rewards.
The success term upweights trajectories with non-zero episodic rewards: under extreme reward sparsity, successful rollouts constitute a small but disproportionately informative subset, and without reweighting, critic learning would be dominated by the abundant low-signal failure trajectories.

\subsection{Memory cost.}
Beyond environment interactions, Q-Evolve also incurs lower training and inference overhead: during policy optimization, only the policy and reference models are loaded (two models), matching the memory footprint of GRPO, since the critic's advantages are pre-computed offline and need not be retained.
At inference time, Q-Evolve requires no additional critic evaluation, whereas QLASS performs multi-candidate scoring per step, adding non-trivial inference cost.

\begin{table}[t]
\centering
\small
\caption{Sensitivity analysis on AlfWorld (single iteration). \textbf{Bold}: best result.}
\setlength{\tabcolsep}{4pt}
\begin{tabular}{lcc}
\toprule
Variant & Seen & Unseen \\
\midrule
w/o RR                    & 83.6 & 82.7 \\
RR-alt $(-0.1,-0.05,-0.03)$  & 85.7 & 84.3 \\
RR-high-fmt $(-1,-0.2,-0.1)$ & 69.3 & 64.2 \\
\midrule
w/o W-IQL                 & 83.6 & 76.1 \\
W-IQL w/o temporal term   & 83.6 & 79.1 \\
W-IQL w/o success term    & 85.0 & 83.6 \\
\midrule
\textbf{Q-Evolve}         & \textbf{87.9} & \textbf{86.6} \\
\bottomrule
\end{tabular}
\label{tab:sensitivity}
\end{table}

\section{Implementation Details}
\subsection{Prompts in Experiments}
\label{app:prompts}
We list all the prompts used in our experiments in Figure~\ref{fig:gen_alfworld} (AlfWorld), Figure~\ref{fig:gen_sciworld} (SciWorld) and Figure~\ref{fig:gen_webshop} (Webshop).

\begin{figure}[h]
\centering
\begin{tikzpicture}
\node[
  draw,
  rounded corners,
  text width=\dimexpr\linewidth-24pt\relax,
  align=flush center,
  inner sep=12pt,
  fill=lightgray!40
]{
\begin{minipage}{\linewidth}
\footnotesize
Interact with a household to solve a task. Imagine you are an intelligent agent in a household environment and your target is to perform actions to complete the task goal. At the beginning of your interactions, you will be given the detailed description of the current environment and your goal to accomplish.

For each of your turns, you will be given the observation from the last turn. You should first think about the current condition and plan your future actions, and then output your action in this turn. Your output must strictly follow this format:\\
Thought: your thoughts.\\
Action: your next action.\\

\medskip
The available actions are:
\begin{enumerate}\itemsep0pt
  \item go to \texttt{\{recep\}}
  \item take \texttt{\{obj\}} from \texttt{\{recep\}}
  \item put \texttt{\{obj\}} in/on \texttt{\{recep\}}
  \item open \texttt{\{recep\}}
  \item close \texttt{\{recep\}}
  \item toggle \texttt{\{obj\}} \texttt{\{recep\}}
  \item clean \texttt{\{obj\}} with \texttt{\{recep\}}
  \item heat \texttt{\{obj\}} with \texttt{\{recep\}}
  \item cool \texttt{\{obj\}} with \texttt{\{recep\}}
\end{enumerate}
where \texttt{\{obj\}} and \texttt{\{recep\}} correspond to objects and receptacles.\\
After each turn, the environment will give you immediate feedback based on which you plan your next few steps. If the environment outputs \texttt{"Nothing happened"}, that means the previous action is invalid and you should try more options.

\medskip
Your response should use the following format:\\
Thought: \texttt{\textless your thoughts\textgreater}\\
Action: \texttt{\textless your next action\textgreater}\\
\end{minipage}
};
\end{tikzpicture}

\caption{The instruction prompt provided to the language agent on AlfWorld.}
\label{fig:gen_alfworld}
\end{figure}

\begin{figure}[t]
\centering
\begin{tikzpicture}
\node[
  draw,
  rounded corners,
  text width=\dimexpr\linewidth-24pt\relax,
  align=flush center,
  inner sep=12pt,
  fill=lightgray!40
]{
\begin{minipage}{\linewidth}
\footnotesize
You are a helpful assistant to do some scientific experiment in an environment. \\
In the environment, there are several rooms: kitchen, foundry, workshop, bathroom, outside, living room, bedroom, greenhouse, art studio, hallway \\
You should explore the environment and find the items you need to complete the experiment. \\
You can teleport to any room in one step. \\
All containers in the environment have already been opened, you can directly get items from the containers. \\ \\
The available actions are:\\
open OBJ: open a container\\
close OBJ: close a container\\
activate OBJ: activate a device\\
deactivate OBJ: deactivate a device\\
connect OBJ to OBJ: connect electrical components\\
disconnect OBJ: disconnect electrical components\\
use OBJ [on OBJ]: use a device/item\\
look around: describe the current room\\
examine OBJ: describe an object in detail\\
look at OBJ: describe a container's contents\\
read OBJ: read a note or book\\
move OBJ to OBJ: move an object to a container\\
pick up OBJ: move an object to the inventory\\
pour OBJ into OBJ: pour a liquid into a container\\
mix OBJ: chemically mix a container\\
teleport to LOC: teleport to a specific room\\
focus on OBJ: signal intent on a task object\\
wait: task no action for 10 steps\\
wait1: task no action for a step \\
\medskip
Your response should use the following format:\\\\
Thought: \texttt{\textless your thoughts\textgreater}\\
Action: \texttt{\textless your next action\textgreater}\\
\end{minipage}
};
\end{tikzpicture}

\caption{The instruction prompt provided to the language agent on SciWorld.}
\label{fig:gen_sciworld}
\end{figure}

\begin{figure}[t]
    \begin{tikzpicture}
    \node [draw, rounded corners,
         text width=\linewidth-24pt,    
         align=flush center, 
         inner sep=12 pt,
         fill=lightgray!40,
    ]%
    {
    \begin{minipage}{1\linewidth}
    \footnotesize{
You are web shopping. 

I will give you instructions about what to do. 

You have to follow the instructions.

Every round I will give you an observation and alist of available actions, you have to respond anaction based on the state and instruction.

You can use search action if search is available.

You can click one of the buttons in clickables.

An action should be of the following structure:

\textbf{search[keywords]}

\textbf{click[value]}

If the action is not valid, perform nothing.Keywords in search are up to you, but the valuein click must be a value in the list of available actions.Remember that your keywords in search shouldbe carefully designed.Your response should use the following format:Thought: I think ...Action: click[something]
 }\\
\end{minipage}
};
\end{tikzpicture}
\caption{
The instruction prompt provided to language agent on WebShop.
}
\label{fig:gen_webshop}
\end{figure}

\subsection{Critic Model Structure}

\label{sec:critic}
\paragraph{Critic Model.}
We parameterize the critic with a single pretrained LLM backbone and route computation through lightweight LoRA adapters to predict both the state value $V(s)$ and the state--action value $Q(s,a)$. This shared-backbone design preserves a common representation while enabling head-specific specialization, avoiding the overhead of maintaining separate LLMs for $V$ and $Q$.

Let $f_\theta$ denote the pretrained LLM. Given a tokenized input $x$, it produces hidden states
$H = f_\theta(x)\in\mathbb{R}^{B\times T\times d}$.
We attach two LoRA adapters: a \emph{value adapter} $\phi_v$ and a \emph{Q adapter} $\phi_q$.
For each forward pass, we activate the corresponding adapter:
\begin{equation}
H^{(v)} = f_{\theta,\phi_v}(x), \qquad
H^{(q)} = f_{\theta,\phi_q}(x).
\label{eq:adapter-routing}
\end{equation}

The input to the critic is a trajectory formatted as a multi-turn chat transcript (user observations and assistant actions) and then tokenized into a single sequence $x=(x_1,\dots,x_T)$ using the same chat template as the policy.
We tokenize the \emph{full} trajectory once to obtain the token ids, and for each step $t$ we additionally tokenize three \emph{prefixes} of the transcript: (i) the prefix ending at the current observation (state prefix), (ii) the prefix ending after the agent action is appended (state--action prefix), and (iii) the prefix ending at the next observation (next-state prefix).
The lengths of these three tokenized prefixes define their end-token indices $p_t^{(s)}, p_t^{(sa)}, p_t^{(s')}$ (i.e., the last token positions of each prefix in the full sequence).

Given hidden states $H^{(v)}=f_{\theta,\phi_v}(x)$ and $H^{(q)}=f_{\theta,\phi_q}(x)$, we represent each segment by the hidden state at its end position (last-token pooling):
\[
h_t^{(s)} = H^{(v)}_{p_t^{(s)}}, \quad
h_t^{(sa)} = H^{(q)}_{p_t^{(sa)}}, \quad
h_t^{(s')} = H^{(v)}_{p_t^{(s')}}.
\]
In practice, we can sample multiple steps from the same trajectory (e.g., $K$ steps) and gather the corresponding hidden states in one forward pass; these pooled vectors are then fed into small MLP heads to produce $V(s)$, $V(s')$, and Double-$Q(s,a)$ predictions.

\paragraph{Prediction heads.}
On top of the routed hidden states, we use lightweight heads to produce scalar predictions.
For each step, we first pool the routed hidden states at the precomputed end-token positions, obtaining vectors $h_t^{(s)}$, $h_t^{(sa)}$, and $h_t^{(s')}$.
We then apply one value head $g_v$ and two Q heads $g_{q_1}, g_{q_2}$ for Double Q-learning:
\begin{equation}
\begin{aligned}
V(s_t) \; &=\; g_v\!\left(h_t^{(s)}\right), \\
Q_1(s_t,a_t) \; &=\; g_{q_1}\!\left(h_t^{(sa)}\right), \\
Q_2(s_t,a_t) \; &=\; g_{q_2}\!\left(h_t^{(sa)}\right).
\end{aligned}
\label{eq:critic-heads}
\end{equation}

\paragraph{Delayed $Q$ network.}
For stable target estimation in Bellman-style backups, we maintain a delayed (target) $Q$ network that mirrors the on-training $Q$ branch.
It shares the same backbone $f_\theta$ but uses a separate set of EMA parameters $(\bar\phi_q,\bar g_{q_1}, \bar g_{q_2})$.
Concretely, the delayed branch routes the input through the target $Q$ adapter
\begin{equation}
\bar H^{(q)} = f_{\theta,\bar\phi_q}(x),
\label{eq:delayed-q-routing}
\end{equation}
pools the end-of-action representation $\bar h_t^{(sa)}$ in the same way as the online branch, and predicts target Double-$Q$ values:
\begin{equation}
\bar Q_1(s_t,a_t)=\bar g_{q_1}\!\left(\bar h_t^{(sa)}\right), \qquad
\bar Q_2(s_t,a_t)=\bar g_{q_2}\!\left(\bar h_t^{(sa)}\right).
\label{eq:delayed-q-heads}
\end{equation}
The target parameters are updated by an exponential moving average of the online $Q$ parameters:
\begin{equation}
\begin{aligned}
\bar\phi_q &\leftarrow (1-\lambda_{\text{EMA}})\,\bar\phi_q + \lambda_{\text{EMA}}\,\phi_q,\\
\bar g_{q_1} &\leftarrow (1-\lambda_{\text{EMA}})\,\bar g_{q_1} + \lambda_{\text{EMA}}\, g_{q_1},\\
\bar g_{q_2} &\leftarrow (1-\lambda_{\text{EMA}})\,\bar g_{q_2} + \lambda_{\text{EMA}}\, g_{q_2}.
\end{aligned}
\label{eq:ema-delayed-q}
\end{equation}
We apply this soft update every $K$ optimization steps; in our implementation, we set $K=2$ and $\lambda_{\text{EMA}}=0.005$ for both the $Q$ heads and the $Q$ adapter.

\paragraph{Optimization with step weights and Double-$Q$.}
Each sampled step $t$ is associated with a nonnegative weight $w_t$ (stored in the offline dataset), which upweights more informative steps (e.g., later steps or decisive transitions).
We train the critic with \emph{Double-$Q$}: two separate heads $Q_1$ and $Q_2$ are learned in parallel on the same representation $h_t^{(sa)}$ and the same TD target. 
Concretely, for a transition $(s_t,a_t,r_t,s_{t+1},d_t)$ we compute the bootstrap target using the value branch,
\[
y_t \;=\; r_t + \gamma(1-d_t)\,V(s_{t+1}),
\]
and regress both heads to this target with a weighted squared loss:
\[
\mathcal{L}_Q \;=\; \sum_t w_t\Big[(Q_1(s_t,a_t)-y_t)^2 + (Q_2(s_t,a_t)-y_t)^2\Big].
\]
Using two $Q$ heads helps reduce overestimation and improves stability: when fitting the value function, we form a conservative target from the delayed (EMA) $Q$ network by taking the minimum of the two target heads,
\[
\bar Q(s_t,a_t) \;=\; \min\!\big(\bar Q_1(s_t,a_t),\,\bar Q_2(s_t,a_t)\big),
\]
and update $V$ via a (weighted) expectile regression toward $\bar Q$:
\[
\mathcal{L}_V \;=\; \sum_t w_t\,\ell_{\mathrm{exp}}\!\big(\bar Q(s_t,a_t)-V(s_t)\big).
\]
The delayed $Q$ network is not optimized by gradient descent; instead, its adapter and heads are updated by EMA from the on-training $Q$ parameters every $K$ steps.

\begin{algorithm}[t]
\small
\caption{Critic optimization with Double-$Q$, delayed $Q$ (EMA), and step weights}
\label{alg:critic-opt}
\DontPrintSemicolon
\KwIn{Offline trajectory tokens; per-step indices $p_t^{(s)},p_t^{(sa)},p_t^{(s')}$ and weights $w_t$; discount $\gamma$; expectile $m$; EMA rate $\lambda_{\text{EMA}}$; update period $K$.}

Initialize online params $(\phi_v,g_v)$ and $(\phi_q,g_{q_1},g_{q_2})$\;
Set target params $(\bar\phi_q,\bar g_{q_1},\bar g_{q_2}) \leftarrow (\phi_q,g_{q_1},g_{q_2})$\;

\For{$n=1,2,\dots$}{
    Sample a trajectory and select $K$ steps $\{t_k\}_{k=1}^K$ with weights $\{w_{t_k}\}$\; \\
    Build a token prefix $x$ truncated to the longest selected next-state prefix\;\\
    Compute $H^{(v)}=f_{\theta,\phi_v}(x)$ and $H^{(q)}=f_{\theta,\phi_q}(x)$\;\\
    Gather $h^{(s)}_{t_k}=H^{(v)}_{p_{t_k}^{(s)}},\; h^{(sa)}_{t_k}=H^{(q)}_{p_{t_k}^{(sa)}},\; h^{(s')}_{t_k}=H^{(v)}_{p_{t_k}^{(s')}}$\;

    \eIf(\tcp*[f]{update $Q$}){$n$ is a $Q$-update step}{
        $V'_{t_k} \leftarrow g_v\!\left(h^{(s')}_{t_k}\right)$\;,
        $y_{t_k} \leftarrow r_{t_k} + \gamma(1-d_{t_k})V'_{t_k}$\;\\
        $Q_{1,t_k} \leftarrow g_{q_1}\!\left(h^{(sa)}_{t_k}\right)$\;,
        $Q_{2,t_k} \leftarrow g_{q_2}\!\left(h^{(sa)}_{t_k}\right)$\;\\
        $\mathcal{L}_Q \leftarrow \sum_{k=1}^{K} w_{t_k}\Big[(Q_{1,t_k}-y_{t_k})^2 + (Q_{2,t_k}-y_{t_k})^2\Big]$\;\\
        Update $(\phi_q,g_{q_1},g_{q_2})$ by descending $\nabla \mathcal{L}_Q$\;\\
    }{ \tcp*[f]{update $V$} \\
        $V_{t_k} \leftarrow g_v\!\left(h^{(s)}_{t_k}\right)$\;\\
        Compute $\bar H^{(q)}=f_{\theta,\bar\phi_q}(x)$ and gather $\bar h^{(sa)}_{t_k}=\bar H^{(q)}_{p_{t_k}^{(sa)}}$\;\\
        $\bar Q_{1,t_k} \leftarrow \bar g_{q_1}\!\left(\bar h^{(sa)}_{t_k}\right)$\;,
        $\bar Q_{2,t_k} \leftarrow \bar g_{q_2}\!\left(\bar h^{(sa)}_{t_k}\right)$\;\\
        $\bar Q_{t_k} \leftarrow \min(\bar Q_{1,t_k},\bar Q_{2,t_k})$\;\\
        $\mathcal{L}_V \leftarrow \sum_{k=1}^{K} w_{t_k}\,\ell_{\mathrm{exp}}\!\left(\bar Q_{t_k}-V_{t_k}\right)$\;\\
        Update $(\phi_v,g_v)$ by descending $\nabla \mathcal{L}_V$\;\\
    }

    \If{$n \bmod K = 0$}{
        $(\bar\phi_q,\bar g_{q_1},\bar g_{q_2}) \leftarrow (1-\lambda_{\text{EMA}})(\bar\phi_q,\bar g_{q_1},\bar g_{q_2}) + \lambda_{\text{EMA}}(\phi_q,g_{q_1},g_{q_2})$\;
    }
}
\end{algorithm}
\begin{table}[t]
\centering
\caption{Rewards for penalty.}
\setlength{\tabcolsep}{6pt}
\begin{tabular}{cccc}
\toprule
\textbf{Task} & $r^{\text{fmt}}$ & $r^{\text{inv}}$ & $r^{\text{repeat}}$ \\
\midrule
\textbf{AlfWorld} & $-0.3$ & $-0.2$  & $-0.1$  \\
\textbf{WebShop} & $-0.3$ & $0$  & $0$  \\
\textbf{SciWorld}           & $-0.3$ & $-0.2$ & $-0.1$ \\
\bottomrule
\end{tabular}
\label{tab:rews}
\end{table}

\subsection{Hyperparameters}
\label{app:hyper-param}
We summarize the hyperparameters used across all stages in this section. All hyperparameters leveraged in our method are in the Table~\ref{tab:hyper-param} and  Table~\ref{tab:hyper-param2}.  
\begin{table}[t]
\centering
\small
\caption{\textbf{Hyperparameters.}}
\label{tab:hyperparams}
\begin{tabular}{l r}
\toprule
\textbf{Hyperparameter} & \textbf{Value} \\
\midrule
Batch size & dynamic \\
Number of policy training epochs & 3 \\
Number of critic training epochs & 20 \\
Weight decay & 0.0 \\
Warmup ratio & 0.03 \\
SFT learning rate & 1e-5 \\
LR scheduler type & Cosine \\
Model max length & 4096 \\
Discount factor $\gamma$ & 0.95 \\
Discount factor $\lambda$ in GAE & 0.95 \\
Maximum episode length on WebShop & 10 \\
Maximum episode length on SciWorld & 24 \\
Maximum episode length on ALFWorld & 30 \\
Sampled trajectory number for self-training & 3 \\
Exploration temperature & 2.0 \\
\bottomrule
\end{tabular}
\label{tab:hyper-param}
\end{table}

\begin{table}
\centering
\caption{Ablation study on hyper-parameters $\epsilon_{\text{low}}$ and $\epsilon_{\text{high}}$ on AlfWorld. Best in each split is in \textbf{bold}.}
\label{tab:eps_ablation_alfworld}
\begin{tabular}{cc|cc}
\toprule
$\epsilon_{\text{low}}$ & $\epsilon_{\text{high}}$ & Seen & Unseen \\
\midrule
0.2 & 0.2 &   77.9 & 78.4 \\
0.4 & 0.2 &  74.3 & 77.6 \\
0.2 & 0.4 & 75.7 & 81.3 \\
0.4 & 0.4 &  80.7 & 85.1 \\
0.8 & 0.4 &  \textbf{87.9} & \textbf{86.6} \\
0.4 & 0.8 &  81.4 & 83.6 \\
0.8 & 0.8 &  86.4 & 85.8 \\
\bottomrule
\end{tabular}
\end{table}

\begin{table*}
\centering
\small
\caption{Hyper-parameters for Iteration 1 vs Iteration 2.}
\label{tab:iter_key_hparams}
\setlength{\tabcolsep}{8pt}
\renewcommand{\arraystretch}{1.2}
\begin{tabular}{lcc}
\toprule
\textbf{Hyper-parameter} & \textbf{Iter 1} & \textbf{Iter 2} \\
\midrule
Environment & \texttt{AlfWorld} / \texttt{SciWorld} / \texttt{WebShop} & \texttt{AlfWorld} / \texttt{SciWorld} / \texttt{WebShop} \\
Token length (max) & 3600/4096/4096 & 3600/4096/4096 \\
Learning rate for Policy Learning & 1e-5/5e-6/5e-6 & 5e-6/5e-6/5e-6 \\
Learning rate for Critic Learning & 1e-5/1e-5/1e-5 & 1e-5/5e-6/5e-6 \\

Epochs for Policy Learning & 3/2/2 & 2/2/2 \\
Epochs for Critic Learning & 20/20/20 & 2/2/2 \\
Steps for old policy backup & no/50/200 & 200/200/200  \\
\bottomrule
\end{tabular}
\label{tab:hyper-param2}
\end{table*}

\begin{table}[t]
\centering
\small
\setlength{\tabcolsep}{3pt}
\caption{Dataset statistics. ``Test (Seen)'' and ``Test (Unseen)'' indicate evaluation scenarios. ``Ave. Steps'' is the average interaction steps.}
\begin{tabular}{lcccc}
\toprule
 \textbf{Task} & \textbf{Train} &  \textbf{Test (Seen)} &  \textbf{Test (Unseen)} & \textbf{Ave. Steps} \\
\midrule
WebShop      & 1938 & 200 & -   & 4.9 \\
ScienceWorld & 1483 & 194 & 241 & 14.4 \\
ALFWorld     & 3321 & 140 & 134 & 10.1 \\
\bottomrule
\end{tabular}
\label{tab:tasks}
\end{table}

The hyperparameters of the reward penalty are shown in Table~\ref{tab:rews}. In general, we impose a stronger penalty on action--think format errors than on invalid-but-parsable actions, since the former violates the interaction protocol and typically prevents any meaningful environment transition, whereas the latter reflects a semantically infeasible choice under a valid protocol. We additionally penalize non-meaningful actions when the observation remains unchanged after executing an action, which encourages exploration and discourages ineffective interactions; we exclude cases where no-op behavior is part of the environment design (e.g., the \texttt{wait} action in SciWorld, and unchanged observations in WebShop).

\subsection{Additional Experiments.}
\label{app:additional_results}
\textbf{Ablation study on Hyper-parameter $\epsilon_{\text{low}}$ and $\epsilon_{\text{high}}$ on Alfworld.}
As shown in Table~\ref{tab:eps_ablation_alfworld}, larger clipping thresholds allow more aggressive policy updates. When $\epsilon$ is small, varying $\epsilon_{\text{low}}$ and $\epsilon_{\text{high}}$ has only a mild effect, since the update magnitude is tightly constrained. As $\epsilon$ increases, using a larger $\epsilon_{\text{low}}$ becomes more beneficial: it enables stronger down-weighting of known incorrect actions, which reduces overfitting to suboptimal behaviors and leads to better generalization on the Unseen split.

\subsection{Runtime Analysis}
All experiments were conducted on a single node with 4$\times$H100 GPUs.
In our pipeline, behavior cloning takes approximately 5 minutes per epoch, critic training takes about 1 hour per epoch, and policy learning takes around 15 minutes per epoch.
Overall, the runtime is dominated by critic training, while the policy learning and behavior cloning stages are comparatively lightweight.

\end{document}